%% file: main.tex
\documentclass{article}

\usepackage[preprint]{neurips_2026}

\usepackage[utf8]{inputenc}
\usepackage[T1]{fontenc}
\usepackage[hidelinks]{hyperref}
\usepackage{url}
\usepackage{booktabs}
\usepackage{multirow}
\usepackage{array}
\usepackage{amsfonts}
\usepackage{amsmath}
\usepackage{amssymb}
\usepackage{amsthm}
\usepackage{mathtools}
\usepackage{microtype}
\usepackage{xcolor}
\usepackage{graphicx}
\usepackage{subcaption}
\usepackage{enumitem}
\usepackage{float}

\newcommand{\fastumap}{\textsc{FastUMAP}}
\newcommand{\umap}{\textsc{UMAP}}
\newcommand{\opentsne}{\textsc{openTSNE}}
\newcommand{\sude}{\textsc{SUDE}}
\newcommand{\R}{\mathbb{R}}

\newtheorem{proposition}{Proposition}

\title{FastUMAP: Scalable Dimensionality Reduction via Bipartite Landmark Sampling}

\author{%
Hongmin Li\textsuperscript{1,2}\\[0.25em]
\normalfont\textsuperscript{1}School of Life Science and Technology, Institute of Science Tokyo\\
\normalfont 2-12-1 Ookayama, Meguro-ku, Tokyo 152-8550, Japan\\
\normalfont\textsuperscript{2}Department of Computational Biology and Medical Sciences\\
\normalfont Graduate School of Frontier Sciences, The University of Tokyo\\
\normalfont 5-1-5 Kashiwanoha, Kashiwa-shi, Chiba 277-8561, Japan\\
\normalfont\texttt{lihongmin@edu.k.u-tokyo.ac.jp}\\[0.25em]
\normalfont\small Researcher, School of Life Science and Technology, Institute of Science Tokyo;\\
\normalfont\small Guest Researcher, Department of Computational Biology and Medical Sciences,\\
\normalfont\small Graduate School of Frontier Sciences, The University of Tokyo.\\
\normalfont\small ORCID: \href{https://orcid.org/0000-0003-0228-0600}{0000-0003-0228-0600}
}

\begin{document}

\maketitle

\begin{abstract}
\input{sections/abstract}
\end{abstract}

\section{Introduction}
\input{sections/introduction}

\section{Related Work}
\input{sections/related_work}

\section{Method}
\label{sec:method}
\input{sections/method}

\section{Experiments}
\input{sections/experiments}

\section{Limitations}
\input{sections/limitations}

\section{Conclusion}
\input{sections/conclusion}

\clearpage
\bibliographystyle{plainnat}
\bibliography{references}

\clearpage
\appendix
\section{Appendix}
\input{sections/appendix}

\input{checklist}

\end{document}

%% file: sections/abstract.tex
Exploratory analysis of high-dimensional data rarely stops at a single
embedding. In practice, analysts rerun dimensionality reduction after changing
preprocessing, subsets, or hyperparameters, and standard nonlinear methods can
quickly become the bottleneck. We introduce \textbf{FastUMAP} (Bipartite
Manifold Approximation and Projection), a landmark-based method designed for
this repeated-use setting. FastUMAP builds a sparse point-landmark fuzzy graph,
computes a Nystr\"{o}m spectral warm start from the induced landmark affinity,
and then refines all sample coordinates with a UMAP-style objective on the
bipartite graph. The landmark ratio $r = m/n$ provides a direct way to trade
runtime against fidelity. On 9 benchmark datasets spanning 178 to 70,000
samples, FastUMAP has the lowest runtime on 7 datasets in our reported
default-implementation comparison on one workstation. On MNIST and
Fashion-MNIST ($n{=}70{,}000$), it runs in about 4.6 seconds, compared with
about 73--75 seconds for Barnes--Hut t-SNE, while reaching 91.4\% mean kNN
accuracy versus 94.6\% for the strongest accuracy baseline. FastUMAP is
therefore best viewed as a fast option for repeated exploratory embedding,
rather than as a replacement for accuracy-first methods.

%% file: sections/introduction.tex
Nonlinear dimensionality reduction remains a standard tool for exploratory analysis, but the bottleneck changes once users need many embeddings rather than a single final visualization. In the workstation-scale regime studied here, one run can already take tens of seconds, so repeated parameter sweeps, preprocessing changes, and visual comparison become expensive. Barnes--Hut t-SNE and UMAP remain strong quality baselines, yet in our 70,000-sample benchmark they already fall into this latency range~\cite{maaten2014,mcinnes2018umap}.

The cost bottleneck is structural. Standard DR pipelines still rely on all-pairs or approximate all-pairs neighborhood graphs, which lead to $O(n \log n)$ or empirically superlinear behavior. Landmark-based methods offer a plausible escape by embedding only a subset of the data and projecting the rest, but existing methods often leave practitioners with an unclear trade-off: they are faster, yet it is difficult to tell what geometric information is being preserved, what initialization is appropriate, and which parameter actually controls the fidelity loss.

\begin{figure}[t]
\centering
\includegraphics[width=\linewidth]{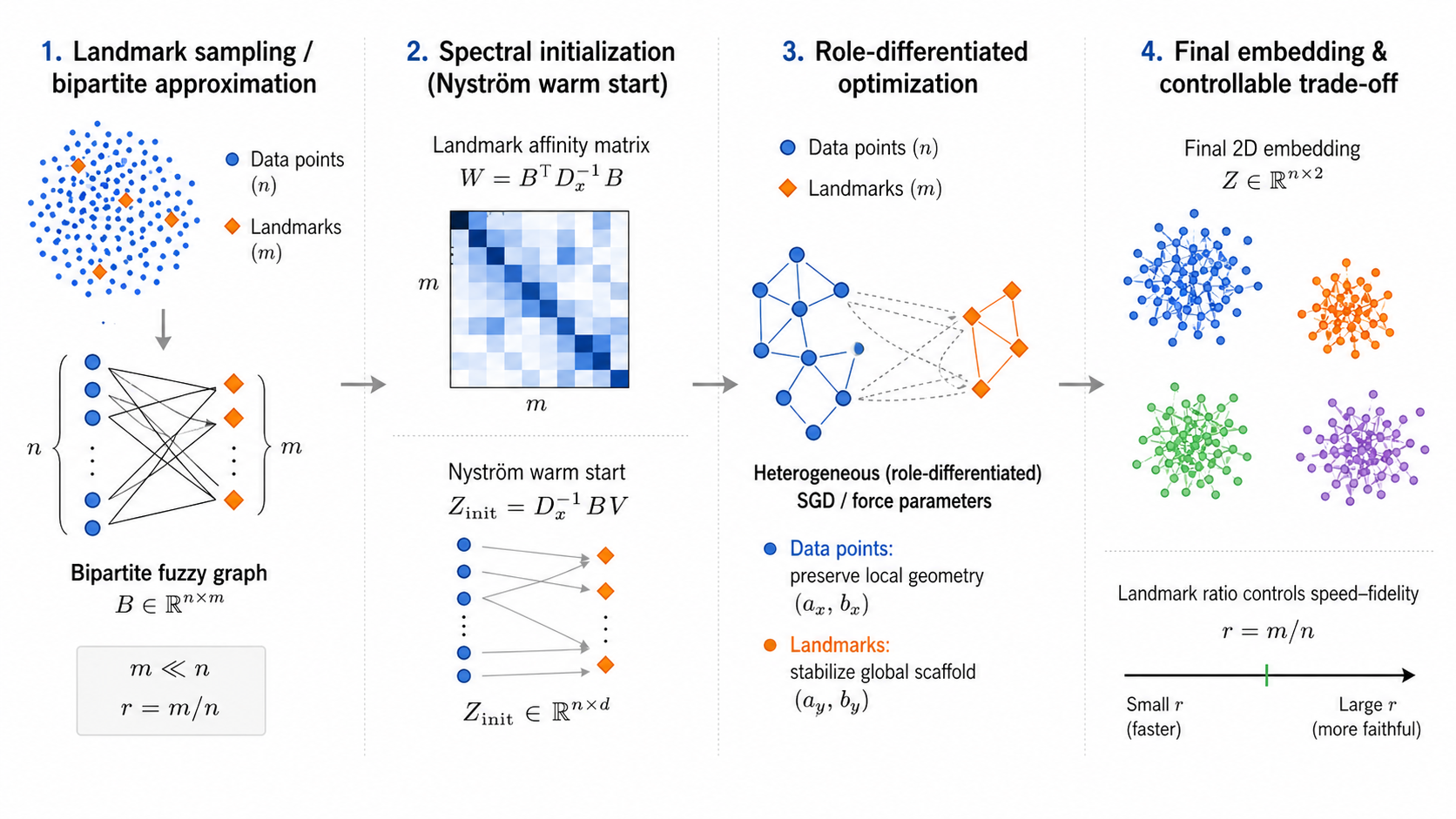}
\caption{\fastumap{} replaces the full-neighborhood graph bottleneck with an explicit landmark approximation. Standard UMAP-style pipelines build a neighborhood graph over all $n$ data points, making repeated exploratory runs slow. FastUMAP samples $m \ll n$ landmarks, builds a sparse bipartite fuzzy graph $\mathbf{B}\in\mathbb{R}^{n\times m}$, forms a landmark affinity $\mathbf{W}=\mathbf{B}^{\top}\mathbf{D}_x^{-1}\mathbf{B}$, computes a Nystr\"{o}m spectral warm start, and refines the embedding with role-differentiated optimization. The landmark ratio $r=m/n$ is the user-facing approximation knob: smaller $r$ favors speed, while larger $r$ increases neighborhood coverage and moves the method closer to full UMAP.}
\label{fig:main-conceptual-overview}
\end{figure}

We present \textbf{FastUMAP} (Bipartite Manifold Approximation and Projection), a landmark-based DR method for settings where embedding latency matters. FastUMAP keeps the local-connectivity semantics of UMAP, but replaces the full point-point graph with a sparse bipartite graph over samples and landmarks. It then uses a Nystr\"{o}m spectral warm start and a role-differentiated SGD refinement. The landmark ratio $r = m/n$ makes the approximation level explicit.
Figure~\ref{fig:main-conceptual-overview} summarizes this design:
the method trades a full graph for a bipartite landmark graph, keeps a spectral
initialization through a Nystr\"{o}m construction, and makes the speed--fidelity
choice explicit through $r$.

Across 9 benchmark datasets spanning 178 to 70,000 samples, FastUMAP has the lowest runtime on 7 datasets in our reported default-implementation comparison on one workstation and remains much faster than BH-t-SNE on the medium-to-large datasets in that environment. The price is a moderate drop in kNN accuracy relative to the strongest accuracy baseline, with the largest gaps on MNIST and Fashion-MNIST under the 5{,}000-landmark cap. We therefore present FastUMAP as a speed-first method with a clear fidelity control, not as a universal accuracy winner.

Our contributions are:
\begin{itemize}[leftmargin=1.2em]
    \item \textbf{A UMAP-inspired bipartite landmark formulation.} We replace dense all-pairs graph construction with a sparse bipartite fuzzy graph and initialize the layout with a Nystr\"{o}m spectral warm start.
    \item \textbf{A simple approximation control.} The landmark ratio $r = m/n$ provides a direct way to trade runtime against fidelity, while role-differentiated optimization is treated as a secondary design choice and evaluated empirically.
    \item \textbf{An empirical study of the trade-off.} On 9 benchmark datasets, we show where FastUMAP is fast and where it gives up accuracy in a reported default-implementation comparison. A supporting biological example is deferred to the appendix.
\end{itemize}

%% file: sections/related_work.tex
\label{sec:related}

\subsection{Graph-Based Nonlinear Dimensionality Reduction}

Modern nonlinear DR methods are dominated by two coupled costs: constructing a neighborhood graph over all $n$ samples and optimizing a low-dimensional layout over that graph. t-SNE~\cite{vandermaaten2008tsne} defines high-dimensional similarities over all pairs and optimizes a KL-divergence objective; Barnes--Hut t-SNE~\cite{maaten2014} and interpolation-based variants such as FIt-SNE~\cite{linderman2019} reduce the layout cost substantially, but the pipeline remains oriented toward high-fidelity embeddings rather than repeated low-latency runs. UMAP~\cite{mcinnes2018umap} replaces the t-SNE objective with a fuzzy simplicial-set construction and an edge-sampled cross-entropy objective, yielding a strong practical balance of quality and speed. It has therefore become a common default in exploratory workflows, including single-cell analysis~\cite{becht2019umap}. Even so, its graph-construction stage still scales with an all-point neighborhood search, which becomes expensive once the user needs many runs on datasets in the tens-of-thousands regime.

Other recent methods, including LargeVis~\cite{tang2016}, PaCMAP~\cite{wang2021pacmap}, and TriMAP~\cite{amid2019trimap}, explore alternative objectives or sampling schemes for improving the global-local trade-off. These methods are relevant conceptual comparators, but they do not directly address the specific design question studied here: how to expose a clear, user-facing approximation knob within a UMAP-style graph-and-SGD pipeline.

\subsection{Landmark and Anchor-Based Approximations}

Landmark methods reduce cost by replacing the full $n \times n$ neighborhood structure with a smaller representation defined by $m \ll n$ anchors or landmarks. Classical examples include Landmark MDS~\cite{desilva2003}, which embeds only the landmarks and triangulates the remaining points, and Nystr\"{o}m methods~\cite{williams2001,fowlkes2004}, which approximate large kernel matrices from a landmark subset. The common trade-off is clear: these methods reduce cost, but the approximation often becomes a post hoc projection step, so the relationship between the landmark construction and the final embedding objective is not always explicit.

More recent anchor-graph methods make this structure more explicit. In spectral clustering, bipartite anchor graphs can approximate the full affinity matrix while preserving enough structure for downstream eigendecomposition~\cite{li2022divideandconquer}. In dimensionality reduction, SUDE~\cite{peng2025sude} also uses landmarks to reduce cost, but it follows a different pattern from UMAP-style methods: landmarks are embedded under a modified t-SNE-like objective and non-landmarks are then reconstructed by a constrained projection step.

\subsection{Positioning of FastUMAP}

FastUMAP sits at the intersection of these two lines of work. Relative to UMAP, it replaces the full point-point neighborhood graph with a sparse point-landmark bipartite graph, so the approximation enters at graph construction rather than only through faster optimization. Relative to projection-style landmark methods, it does not treat non-landmark points as a purely post hoc reconstruction problem: all points participate in the same bipartite objective during optimization. Relative to anchor-graph spectral methods, the landmark affinity is not the final output of the method but a warm start for subsequent nonlinear refinement.

This combination is the main distinction of FastUMAP. The method keeps UMAP-style local-connectivity semantics, uses a Nystr\"{o}m-style spectral initialization derived from the bipartite graph, and exposes the landmark ratio $r = m/n$ as the primary speed--fidelity control. We therefore position FastUMAP not as a replacement for accuracy-first baselines, but as a speed-oriented member of the UMAP/landmark family whose approximation regime is explicit and tunable.

%% file: sections/method.tex
FastUMAP embeds a high-dimensional dataset $\mathcal{X} = \{\mathbf{x}_1, \ldots, \mathbf{x}_n\} \subset \R^D$ into $\R^2$ by compressing neighborhood construction through a small landmark set while still optimizing coordinates for all $n$ samples. In the variant studied in this paper, landmarks are sampled from the dataset itself, so ``landmark'' denotes a computational role assigned to a subset of samples rather than an additional set of free variables. The pipeline has four stages: (i) choose $m \ll n$ landmarks, (ii) build a sparse bipartite fuzzy graph from all samples to those landmarks, (iii) compute a reduced spectral warm start on the landmark side, and (iv) refine the layout by edge-sampled SGD with role-differentiated force profiles.

\subsection{Landmark Sampling}

Let $S = \{s_1, \ldots, s_m\} \subset [n]$ denote landmark indices and define $\mathbf{l}_p = \mathbf{x}_{s_p}$ for $p=1,\ldots,m$. The \emph{landmark ratio} $r = m/n$ is the main user-facing approximation knob. Large $r$ retains more neighborhood coverage and moves the method closer to full UMAP, whereas small $r$ reduces graph-construction and spectral cost at the expense of a coarser approximation.

The method itself only requires $\mathcal{L} \subset \mathcal{X}$; the reported experiments use the capped landmark schedules summarized in the appendix. This separation is important: FastUMAP is defined for arbitrary $m$, while the benchmark operating points correspond to particular speed-oriented budget choices.

\begin{proposition}[FastUMAP--UMAP equivalence]
\label{prop:fastumap_umap_equivalence}
When $m = n$, $\mathcal{L} = \mathcal{X}$, and the force parameters for data points and landmarks are tied, the FastUMAP objective reduces to standard UMAP.
\end{proposition}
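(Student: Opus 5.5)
The argument is a direct substitution: I write the FastUMAP objective explicitly in terms of the bipartite graph and its force profiles, set $m=n$ and $\mathcal{L}=\mathcal{X}$, and check that every component matches the corresponding UMAP component. The components are the membership graph, the symmetrization, the attractive and repulsive terms, and the edge-sampling distribution.

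\textbf{Step 1: the graph.} With $S=[n]$ we have $\mathbf{l}_p=\mathbf{x}_p$. The bipartite fuzzy graph $\mathbf{B}\in\R^{n\times n}$ is built by the UMAP recipe: for each $\mathbf{x}_i$, its $k$ nearest landmarks, the local-connectivity offset $\rho_i$, and the bandwidth $\sigma_i$ from the $\log_2 k$ calibration. It therefore becomes exactly UMAP's directed membership matrix $\mathbf{P}$ with $P_{ij}=\exp(-(d(\mathbf{x}_i,\mathbf{x}_j)-\rho_i)_+/\sigma_i)$.

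One point needs care: self-matches. I would adopt the convention, consistent with UMAP, that a point is not its own neighbor, so the $k$ nearest landmarks of $\mathbf{x}_i$ exclude $s_p=i$. I would then apply UMAP's fuzzy union $\mathbf{P}+\mathbf{P}^\top-\mathbf{P}\circ\mathbf{P}^\top$. Since the index sets of rows and columns now coincide, the bipartite edge set collapses to an ordinary point-point edge set, and the symmetrized weights agree with UMAP's $v_{ij}$.

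\textbf{Step 2: the objective.} I write the FastUMAP loss as a sum over bipartite edges $(i,p)$ with weight $b_{ip}$. Each edge contributes an attractive term $-b_{ip}\log\phi(\|\mathbf{y}_i-\mathbf{z}_p\|)$ and a negative-sampled repulsive term $-(1-b_{ip})\log(1-\phi(\cdot))$. Here $\phi(d)=(1+a d^{2b})^{-1}$, and the curve parameters $(a,b)$ and the repulsion strength may depend on whether the endpoint is a data point or a landmark. Because landmarks are samples rather than free variables, $\mathbf{z}_p=\mathbf{y}_{s_p}=\mathbf{y}_p$.

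Tying the force parameters makes the role-dependent $(a,b,\gamma)$ identical. The loss then becomes $\sum_{i,j} v_{ij}\log\frac{v_{ij}}{\phi_{ij}}+(1-v_{ij})\log\frac{1-v_{ij}}{1-\phi_{ij}}$ up to constants, which is UMAP's fuzzy cross-entropy.

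\textbf{Step 3: sampling and initialization.} Negative samples are drawn uniformly from landmarks, which now means uniformly from all points, matching UMAP. Edge sampling proportional to $b_{ip}$ matches UMAP's epoch scheduling. The Nystr\"{o}m warm start also reduces correctly. $\mathbf{W}=\mathbf{B}^\top\mathbf{D}_x^{-1}\mathbf{B}$ is an $n\times n$ affinity, and its extension to all points is the identity extension. The initialization therefore becomes a spectral embedding of a full-graph affinity. This affects only the starting point, not the objective, so it is not needed for the claim, though I would remark on it.

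\textbf{Main obstacle.} The hard part is not algebraic but definitional. The statement is only true under conventions on self-edges, the symmetrization rule, and double counting. In the bipartite form, an edge $(i,j)$ appears both as point-$i$-to-landmark-$j$ and as point-$j$-to-landmark-$i$. I would handle this by showing that the bipartite sum counts each unordered pair twice with weight $v_{ij}$. This is a factor of 2 that does not change the minimizers or the SGD dynamics up to learning-rate rescaling, and I would state the equivalence at that level.
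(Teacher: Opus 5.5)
Your proposal is correct and follows essentially the same route as the paper's proof (identical $k$-NN graph, identical $\rho_i,\sigma_i$ calibration, the same fuzzy-union symmetrization, tied force parameters, with initialization as the only remaining difference), and you are right to flag the symmetrization as a convention: applying the duplication rule to the raw $\mathbf{B}$ would give pair weight $B_{ij}+B_{ji}$ rather than UMAP's $B_{ij}+B_{ji}-B_{ij}B_{ji}$. One side remark is inaccurate but harmless: for $m=n$ the warm start $\mathbf{D}_x^{-1}\mathbf{B}\mathbf{U}$ is a weighted neighbour average of the rows of $\mathbf{U}$, not an identity extension, and $\mathbf{W}$ is a two-step affinity rather than UMAP's graph, neither of which affects the objective.
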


The proof is given in Appendix~\ref{app:equivalence-proof}. In practice FastUMAP is used in the $m \ll n$ regime, where it should be viewed as an explicit approximation family with a controllable fidelity budget rather than as a disguised reimplementation of UMAP.

\subsection{Bipartite Fuzzy Graph Construction}

For each sample $\mathbf{x}_i$, we find its $k$ nearest landmarks and compute adaptive fuzzy memberships with the same local-connectivity calibration used in UMAP~\cite{mcinnes2018umap}. Let $\mathcal{N}_k(i)$ denote those landmark indices. For each $i$, we determine a local offset $\rho_i$ and bandwidth $\sigma_i$ such that
\begin{equation}
\label{eq:membership}
  w_{ip} =
  \begin{cases}
    \exp\!\Bigl(-\dfrac{\max(d(\mathbf{x}_i, \mathbf{l}_p) - \rho_i,\; 0)}{\sigma_i}\Bigr), & p \in \mathcal{N}_k(i), \\
    0, & \text{otherwise},
  \end{cases}
  \qquad
  \sum_{p \in \mathcal{N}_k(i)} w_{ip} = \log_2 k.
\end{equation}
Stacking these memberships yields a sparse bipartite matrix $\mathbf{B} \in \mathbb{R}^{n \times m}$ with at most $nk$ nonzeros.

Because every landmark is also an original sample, each nonzero bipartite affinity induces a pair of directed optimization edges on the sample index set. Writing $\pi(p)=s_p$ for the map from landmark slots to original sample indices, every nonzero $B_{ip}$ contributes $(i,\pi(p))$ and $(\pi(p),i)$ with the same sampling weight. This keeps attraction bidirectional during SGD while storing only the sparse bipartite affinities.

\subsection{Reduced Spectral Warm Start}

Rather than initializing all coordinates randomly, FastUMAP extracts a reduced spectral basis from the bipartite graph. Let
\begin{equation}
\label{eq:landmark_affinity}
  \mathbf{D}_x = \mathrm{diag}(\mathbf{B}\mathbf{1}),
  \qquad
  \mathbf{W} = \mathbf{B}^\top \mathbf{D}_x^{-1} \mathbf{B},
\end{equation}
and let $\mathbf{D}_\ell = \mathrm{diag}(\mathbf{W}\mathbf{1})$. The matrix $\mathbf{W}$ aggregates two-step sample-to-landmark-to-landmark transitions and can be interpreted as the landmark-side reduction of the bipartite normalized-cut problem. We then solve the $m \times m$ eigenproblem
\begin{equation}
\label{eq:normalized_landmark_operator}
  \mathbf{M} = \mathbf{D}_\ell^{-1/2} \mathbf{W} \mathbf{D}_\ell^{-1/2}
\end{equation}
and keep the two leading non-trivial eigenvectors in $\mathbf{U} \in \mathbb{R}^{m \times 2}$. Every sample is then assigned an initial coordinate by Nystr\"{o}m-style projection through its landmark weights:
\begin{equation}
\label{eq:nystrom}
  \mathbf{Z}_{\mathrm{init}} = \mathbf{D}_x^{-1} \mathbf{B}\,\mathbf{U}.
\end{equation}
This step makes the approximation explicit: only an $m \times m$ operator is diagonalized, yet every sample receives an initialization informed by the geometry of its landmark neighborhood. In practice, this warm start materially reduces the number of SGD epochs needed to reach a useful embedding quality level.

\subsection{Role-Differentiated Optimization}

The warm start is refined over a single coordinate set $\mathbf{Z} = \{\mathbf{z}_1, \ldots, \mathbf{z}_n\}$ by UMAP-style edge-sampled SGD with negative sampling~\cite{mcinnes2018umap}. The low-dimensional connection kernel is
\begin{equation}
\label{eq:lowdim_kernel}
  \phi_r(d) = \frac{1}{1 + a_r d^{2b_r}},
  \qquad r \in \{x, y\},
\end{equation}
where $(a_x,b_x)$ is used when the sampled edge is traversed with its head vertex in the ordinary data role, and $(a_y,b_y)$ is used when the same relation is traversed in the landmark role through the reverse directed edge. Thus the heterogeneity is attached to the \emph{role of the updated vertex}, not to a separate landmark embedding space.

Let $\mathcal{E}^+$ denote the directed edge multiset obtained from the nonzeros of $\mathbf{B}$ by the duplication rule above, and let positive edges be sampled with frequency proportional to their inherited weights. A negative-sampling approximation to the fuzzy cross-entropy objective is
\begin{equation}
\label{eq:sgd_objective}
  \widehat{\mathcal{L}}
  = - \sum_{(u,v) \in \mathcal{E}^+}
  \left[
    \log \phi_{r(u)}\!\bigl(\lVert \mathbf{z}_u - \mathbf{z}_v \rVert\bigr)
    + \gamma \; \mathbb{E}_{v^- \sim P_n}
    \log \Bigl(1 - \phi_{r(u)}\!\bigl(\lVert \mathbf{z}_u - \mathbf{z}_{v^-} \rVert\bigr)\Bigr)
  \right],
\end{equation}
where $P_n$ is the negative-sampling distribution over the $n$ sample vertices and $r(u) \in \{x,y\}$ records whether $u$ is currently acting in the data or landmark role. The attractive gradient applied to the head vertex of a positive edge is therefore
\begin{equation}
\label{eq:attractive_gradient}
  \mathbf{g}^{+}_{uv}
  =
  - \frac{2 a_r b_r \, \lVert \mathbf{z}_u - \mathbf{z}_v \rVert^{2b_r-2}}
         {1 + a_r \lVert \mathbf{z}_u - \mathbf{z}_v \rVert^{2b_r}}
    (\mathbf{z}_u - \mathbf{z}_v),
\end{equation}
with $r=r(u)$, and the repulsive term uses the same role-dependent parameters against randomly drawn negatives. This role asymmetry is the main algorithmic departure from standard UMAP: the same pair of samples can experience different force profiles depending on whether the current SGD step is updating a query sample or a landmark representative.

\subsection{Complexity}

For direct sample-to-landmark search, the dominant costs are bipartite $k$-NN construction, reduced spectral initialization, and SGD:
\begin{equation}
\label{eq:complexity}
  T_{\mathrm{FastUMAP}} = O(nmd + nk^2 + m^2 i + Enk),
\end{equation}
where $d$ is the input dimension, $i$ is the number of eigensolver iterations, and $E$ is the number of SGD epochs. The first term reflects distances from $n$ samples to $m$ landmarks, the second term forms $\mathbf{W}$ from a matrix with $O(nk)$ nonzeros, the third diagonalizes only an $m \times m$ operator, and the final term follows from optimizing a graph with $O(nk)$ positive edges and constant-rate negative sampling.

For fixed $m$, $k$, and $E$, the method is linear in $n$. In the capped-adaptive schedule used in our experiments, however, $m$ grows with $n$ until reaching the cap of 5{,}000, so the pre-cap regime is not strictly linear. Once the cap is active, the spectral term is effectively bounded and the observed runtime growth over datasets up to 70,000 samples is empirically close to linear. We therefore reserve the strict $O(n)$ claim for the fixed-budget regime and use the empirical near-linear description only for the capped setting studied in the paper.

%% file: sections/experiments.tex
\subsection{Evaluation Protocol}
\label{subsec:setup}

Our evaluation asks three questions. First, how much runtime can FastUMAP save under ordinary public implementations rather than under a hand-tuned microbenchmark? Second, does the landmark ratio $r = m/n$ provide a predictable runtime--quality trade-off? Third, do the spectral warm start and role-differentiated optimization matter in practice beyond the underlying landmark approximation?

To answer these questions, we evaluate FastUMAP on 9 benchmark datasets spanning 178 to 70,000 samples. The suite covers low-dimensional tabular data (Wine, Dermatology, Breast Cancer), medium-scale structured benchmarks (Mfeat, Spambase, Dry Bean, Shuttle), and large image datasets (MNIST and Fashion-MNIST). All methods receive identical preprocessing: min-max normalization followed by PCA to $\min(50, d_{\mathrm{orig}})$ when $d_{\mathrm{orig}} > 50$ or $n > 5{,}000$. We compare against four baselines: \textbf{BH-t-SNE}~\cite{maaten2014}, \textbf{\opentsne}~\cite{poličar2019opentsne}, \textbf{\umap}~\cite{mcinnes2018umap}, \textbf{\sude}~\cite{peng2025sude}. FastUMAP uses the submitted implementation defaults together with the adaptive landmark budgets listed in the appendix.

We report two primary quantities. The quality metric is kNN accuracy in the embedding space, which matches the local-neighborhood preservation claim made by UMAP-style methods and by FastUMAP itself. The runtime metric is wall-clock dimensionality-reduction time after the shared preprocessing step, because the intended use case is repeated exploratory execution on a workstation. The main benchmark tables are frozen from a cleaned 9-dataset rerun suite: runtime values are medians over three warm-cache runs per method on already preprocessed inputs, whereas quality values remain fixed-seed point estimates on the resulting embeddings. We treat the main runtime table as a reported default-implementation comparison, not as a fully thread-normalized microbenchmark. A matched single-thread diagnostic against BH-t-SNE is reported separately in the appendix to isolate the core runtime advantage without changing the main benchmark setup.

\subsection{Benchmark Results}
\label{subsec:benchmark_results}

Table~\ref{tab:knn_accuracy} summarizes the quality side of this trade-off. BH-t-SNE remains the strongest pure-accuracy baseline in the suite, reaching 94.6\% mean kNN accuracy, while FastUMAP reaches 91.4\%. We treat this gap as the cost of the landmark approximation rather than as an implementation defect. The mean alone is less informative than the per-dataset pattern: on several structured medium-scale datasets, such as Shuttle, FastUMAP remains nearly tied in accuracy (99.5\% vs.\ 99.7\%) while running much faster, whereas the largest gaps appear on MNIST and Fashion-MNIST, where the 5{,}000-landmark cap is most restrictive.

Table~\ref{tab:runtime} summarizes the latency side. Under each method's standard public implementation on the same workstation, FastUMAP has the lowest runtime on 7 of the 9 datasets. On the medium-to-large datasets in this environment, it runs substantially faster than BH-t-SNE, and on MNIST and Fashion-MNIST ($n{=}70{,}000$) it finishes in about 4.6 seconds, compared with about 73--75 seconds for BH-t-SNE. Appendix Table~\ref{tab:single_thread} asks a narrower question: if the comparison is restricted to matched single-thread settings against BH-t-SNE, does the speed advantage remain? It does. Taken together, these two views support a limited claim: FastUMAP is useful for latency-sensitive exploratory work, but the paper does not claim an implementation-agnostic ranking across every package and threading configuration.

\input{tables/table1_knn_accuracy}
\input{tables/table3_runtime}

\begin{figure}[t]
\centering
\includegraphics[width=\textwidth]{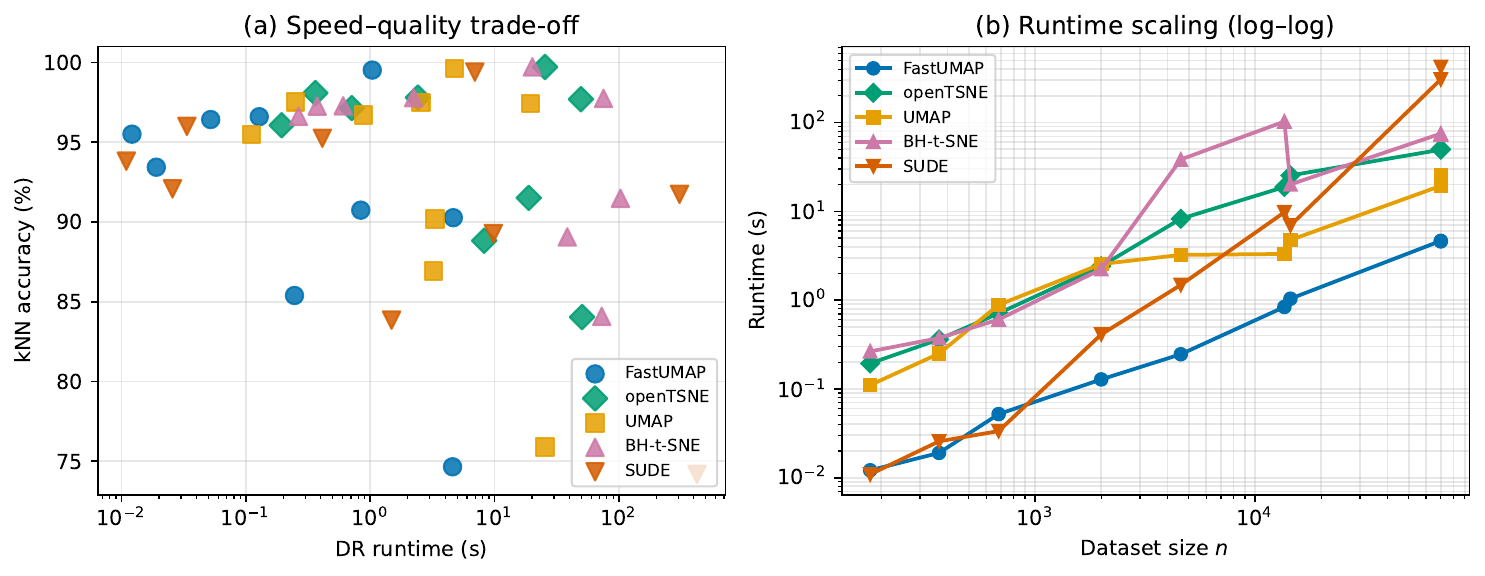}
\caption{\textbf{Quantitative comparison of FastUMAP against standard baselines.} FastUMAP sits in the fast part of the accuracy--runtime trade-off while retaining much of the neighborhood quality on the medium-to-large datasets.}
\label{fig:quantitative}
\end{figure}

\subsection{Ablation and Controllable Trade-off}
\label{subsec:ablation}

The ablations are designed to answer three separate questions. First, is the landmark ratio $r = m/n$ the dominant runtime--fidelity control? Second, does the spectral stage improve optimization dynamics beyond what could be obtained from a random start? Third, do heterogeneous force parameters contribute anything beyond the underlying landmark approximation?

Figure~\ref{fig:r_sweep} answers the first question. As $r$ increases, accuracy improves smoothly at higher runtime cost, confirming that the landmark budget is the main control on approximation quality. FastUMAP is therefore better understood as a family of operating points indexed by $r$, not as a single fixed approximation to UMAP.

Figure~\ref{fig:spectral_init} and Table~\ref{tab:ablation} address the second question. Spectral initialization acts mainly as an optimization accelerator: it helps the method reach useful quality earlier, rather than moving the eventual converged optimum by a large margin. This matches the role described in Section~\ref{sec:method}: the spectral stage provides a geometry-aware warm start rather than a separate embedding objective.

Table~\ref{tab:ablation} addresses the third question. Heterogeneous force parameters improve accuracy on 3 of 4 ablation datasets, but the gains are modest and dataset-dependent. We therefore treat this component as a helpful secondary design choice rather than as the main reason the method works.

\input{tables/table_ablation}

\begin{figure}[t]
\centering
\includegraphics[width=0.92\linewidth]{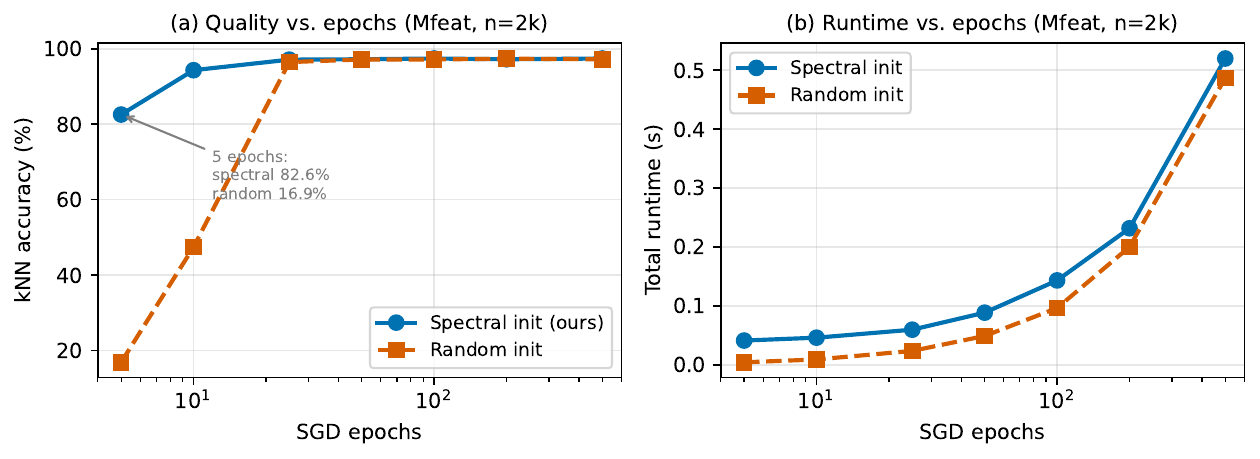}
\caption{\textbf{Spectral versus random initialization on Mfeat.} Spectral initialization reaches useful accuracy much earlier than random initialization, which matters when embeddings need to be produced quickly.}
\label{fig:spectral_init}
\end{figure}

\begin{figure}[t]
\centering
\includegraphics[width=0.95\linewidth]{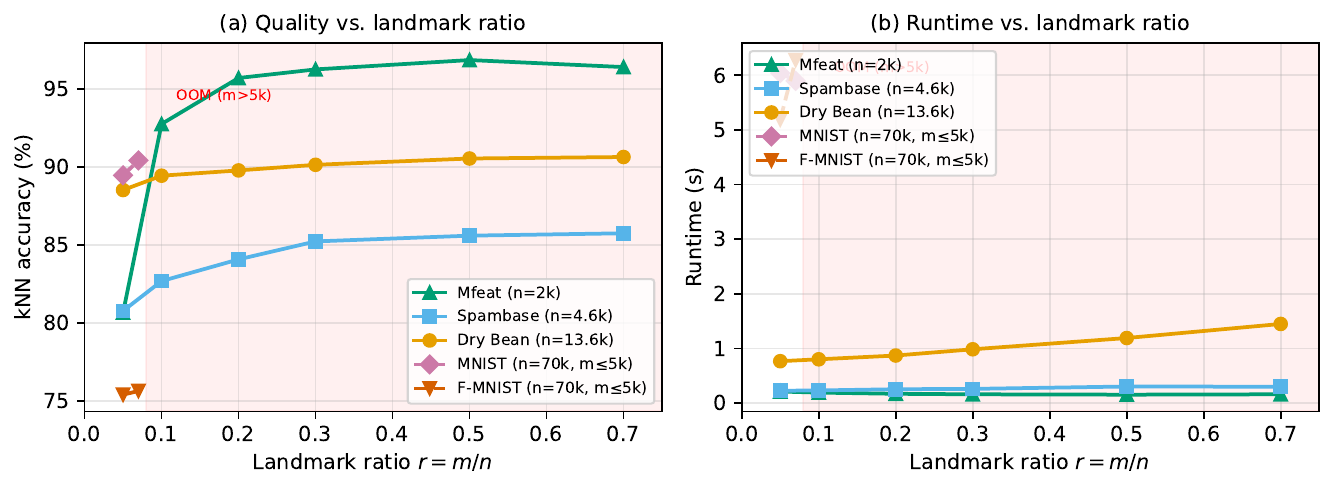}
\caption{\textbf{Runtime--quality trade-off induced by the landmark ratio $r = m/n$.} Accuracy improves smoothly as more landmarks are retained, at a moderate runtime cost.}
\label{fig:r_sweep}
\end{figure}

\noindent A supporting biological example is included in the appendix to test whether the same approximation remains useful outside the generic benchmark suite; we do not treat that example as part of the main benchmark average.

%% file: tables/table1_knn_accuracy.tex
\begin{table}[t]
\centering
\caption{\textbf{kNN classification accuracy} ($k{=}5$, 5-fold CV) across 9 benchmark datasets.
Best per row in \textbf{bold}; second-best \underline{underlined}.}
\label{tab:knn_accuracy}
\small
\begin{tabular}{@{}lccccc@{}}
\toprule
\textbf{Dataset} & \textbf{FastUMAP} & \textbf{openTSNE} & \textbf{UMAP} & \textbf{BH-t-SNE} & \textbf{SUDE} \\
\midrule
Wine & 95.5 & \underline{96.1} & 95.5 & \textbf{96.6} & 93.8 \\
Dermatology & 93.4 & \textbf{98.1} & \underline{97.5} & 97.3 & 92.1 \\
Breast Cancer & 96.4 & \underline{97.1} & 96.7 & \textbf{97.3} & 96.0 \\
Mfeat & 96.6 & \textbf{97.8} & 97.5 & \textbf{97.8} & 95.2 \\
Spambase & 85.4 & \underline{88.8} & 86.9 & \textbf{89.1} & 83.9 \\
Dry Bean & 90.7 & \textbf{91.5} & 90.2 & \underline{91.5} & 89.3 \\
Shuttle & 99.5 & \underline{99.7} & 99.6 & \textbf{99.7} & 99.4 \\
MNIST & 90.3 & \underline{97.7} & 97.4 & \textbf{97.8} & 91.7 \\
F-MNIST & 74.7 & \underline{84.0} & 75.9 & \textbf{84.1} & 74.2 \\
\bottomrule
\end{tabular}
\end{table}

%% file: tables/table3_runtime.tex
\begin{table}[t]
\centering
\caption{\textbf{DR runtime medians (seconds)} across 9 benchmark datasets under each method's standard public implementation on the same workstation, measured after the shared preprocessing step described in Section~\ref{subsec:setup}. Values are medians over three warm-cache runs per method. FastUMAP and openTSNE use multi-core execution; UMAP uses \texttt{n\_jobs=-1}; BH-t-SNE and SUDE are single-threaded in the evaluated implementations. Best time per row in \textbf{bold}.}
\label{tab:runtime}
\small
\begin{tabular}{@{}lrrrrrr@{}}
\toprule
\textbf{Dataset} & $n$ & \textbf{FastUMAP} & \textbf{openTSNE} & \textbf{UMAP} & \textbf{BH-t-SNE} & \textbf{SUDE} \\
\midrule
Wine & 178 & 0.012 & 0.194 & 0.110 & 0.264 & \textbf{0.011} \\
Dermatology & 366 & \textbf{0.019} & 0.361 & 0.250 & 0.374 & 0.026 \\
Breast Cancer & 699 & 0.052 & 0.710 & 0.884 & 0.605 & \textbf{0.033} \\
Mfeat & 2\,000 & \textbf{0.128} & 2.42 & 2.55 & 2.23 & 0.411 \\
Spambase & 4\,601 & \textbf{0.246} & 8.25 & 3.23 & 38.43 & 1.48 \\
Dry Bean & 13\,611 & \textbf{0.838} & 18.83 & 3.31 & 102.85 & 9.77 \\
Shuttle & 14\,500 & \textbf{1.04} & 25.42 & 4.76 & 20.11 & 6.94 \\
MNIST & 70\,000 & \textbf{4.65} & 49.49 & 19.39 & 75.13 & 307.14 \\
F-MNIST & 70\,000 & \textbf{4.59} & 50.53 & 25.41 & 72.75 & 424.23 \\
\bottomrule
\end{tabular}
\end{table}

%% file: tables/table_ablation.tex
\begin{table}[t]
\centering
\caption{\textbf{Ablation study.} Effect of initialization method and force parameter strategy on kNN accuracy (\%) and runtime (seconds). ``Spectral'' is the default bipartite Laplacian initialization; ``Random'' uses random coordinates. ``Hetero'' uses separate $(a_x, b_x)$ and $(a_y, b_y)$ for data and landmarks (default); ``Homo'' uses identical parameters for both.}
\label{tab:ablation}
\small
\begin{tabular}{@{}l cc cc@{}}
\toprule
& \multicolumn{2}{c}{\textbf{Initialization}} & \multicolumn{2}{c}{\textbf{Force Parameters}} \\
\cmidrule(lr){2-3} \cmidrule(lr){4-5}
\textbf{Dataset} & Spectral & Random & Hetero (default) & Homo \\
\midrule
Mfeat      & 97.0 / 1.0s & 97.1 / 0.2s & \textbf{97.3} / 0.2s & 97.0 / 0.2s \\
Dry Bean   & 86.3 / 2.1s & 86.4 / 1.3s & \textbf{87.0} / 1.2s & 86.4 / 1.2s \\
Shuttle    & 99.5 / 1.5s & 99.5 / 1.5s & \textbf{99.6} / 1.5s & 99.5 / 1.5s \\
F-MNIST    & 77.0 / 13.4s & 77.1 / 13.5s & 77.1 / 13.8s & \textbf{78.0} / 14.0s \\
\bottomrule
\end{tabular}

\vspace{2pt}
{\footnotesize Values: kNN accuracy (\%) / runtime (s). Bold indicates best accuracy per row.}
\end{table}

%% file: sections/limitations.tex
FastUMAP does not dominate across all regimes. BH-t-SNE remains the strongest pure-accuracy baseline in the evaluated suite, and UMAP remains a strong general-purpose default when wall-clock latency is not the primary constraint. The present results therefore support a speed-first claim rather than a claim of overall dominance.

The approximation cost is most visible at the largest scales. On MNIST and Fashion-MNIST, the fixed 5{,}000-landmark cap creates a clear accuracy gap relative to BH-t-SNE. This cap is an engineering choice that keeps the spectral stage feasible on commodity hardware; increasing the landmark ratio improves quality, but correspondingly moves the method away from its fastest operating regime.

The empirical scope is also selective rather than exhaustive. We focus on baselines reproduced under one environment and one preprocessing protocol, which yields a cleaner core comparison but leaves room for broader comparisons against additional modern DR packages. The quality tables remain fixed-seed point estimates rather than uncertainty-aware summaries, and the runtime table prioritizes practitioner-facing default implementations over a fully thread-normalized benchmark even though the frozen rerun suite includes repeated warm-cache timing for the main methods. In particular, we only provide a matched single-thread diagnostic against BH-t-SNE, not a full thread-normalized audit for UMAP or openTSNE.

Lower-latency dimensionality reduction can modestly broaden access to exploratory analysis on commodity hardware, especially for scientific users without large compute budgets. The corresponding risk is interpretive: in scientific or biomedical workflows, low-dimensional embeddings can be over-interpreted as decision tools. FastUMAP should therefore be treated as an exploratory visualization method rather than a standalone basis for high-stakes decisions.

%% file: sections/conclusion.tex
We introduced FastUMAP, a UMAP-inspired bipartite landmark method for fast nonlinear dimensionality reduction. The method makes its main trade-off explicit: it replaces the full neighborhood graph with a sparse point-landmark graph, uses a Nystr\"{o}m spectral warm start to initialize the layout, and exposes the landmark ratio $r = m/n$ as the main approximation control. Across 9 benchmark datasets in our reported default-implementation comparison on one workstation, FastUMAP has the lowest runtime on 7 of the 9 datasets and performs especially well on medium-to-large problems where embedding latency matters.

The results point to a practical use case rather than a universal winner. FastUMAP is most useful when repeated runs, interactive exploration, or latency-sensitive workflows matter more than extracting the last few points of neighborhood accuracy. In accuracy-first settings, BH-t-SNE remains the stronger baseline and UMAP remains a strong general-purpose default. The most natural next steps are to improve large-scale fidelity under tight landmark budgets, strengthen landmark selection beyond the current default policy, and broaden the runtime evaluation across additional public implementations.

%% file: sections/appendix.tex
\subsection{Proof of Proposition~\ref{prop:fastumap_umap_equivalence}}
\label{app:equivalence-proof}

\begin{proof}
When $\mathcal{L} = \mathcal{X}$, every data point is simultaneously a landmark. The bipartite $k$-NN search from $\mathbf{x}_i$ to its $k$ nearest landmarks is then exactly the standard $k$-NN search among all points, so the edge set matches UMAP's directed neighborhood graph.

The fuzzy membership weights are also identical. The local offset $\rho_i$ becomes the distance to the first nearest neighbor in $\mathcal{X}$, and the bandwidth $\sigma_i$ is calibrated by the same binary-search rule used in UMAP. Applying the same fuzzy-set symmetrization therefore produces the same symmetric affinity matrix.

Finally, when the force parameters for data points and landmarks are tied, the attractive and repulsive updates coincide with UMAP's standard low-dimensional objective. The only remaining difference is initialization: FastUMAP uses a bipartite spectral warm start, whereas UMAP often starts from a random or spectral layout on the full graph. Given enough optimization steps, both procedures optimize the same objective.
\end{proof}

\subsection{Frozen benchmark provenance}

The submission tables and figures are frozen from a cleaned 9-dataset rerun suite included in the anonymous supplementary package. The authoritative result snapshots are the CSV files \texttt{results\_v3.csv}, \texttt{results\_openTSNE\_multithread.csv}, \texttt{results\_single\_thread.csv}, \texttt{results\_r\_sweep.csv}, and \texttt{results\_spectral\_init.csv}. In particular, \texttt{results\_v3.csv} stores \texttt{dr\_time\_median} values over three warm-cache executions per method on the already preprocessed inputs. FastUMAP, UMAP, BH-t-SNE, and SUDE runtimes are stabilized with repeated warm-cache executions in that rerun suite; openTSNE is reported from the corresponding multithreaded rerun snapshot. The shared min-max normalization and optional PCA step is applied before these timings begin. Quality metrics remain fixed-seed 5-fold-CV point estimates on the resulting embeddings, so we do not treat them as uncertainty-aware summaries.

\subsection{Per-dataset preprocessing and landmark budgets}

Table~\ref{tab:per_dataset_m} records the exact input size, feature dimension, and landmark budget used for each dataset in the reported benchmark suite. We include these details because the capped landmark schedule is part of the experimental protocol: small datasets use a fixed landmark fraction, while medium and large datasets use the same fraction until the cap of 5,000 landmarks is reached. This makes the large-dataset runs speed-oriented by design and should be interpreted as a controlled operating point rather than an accuracy-maximizing sweep.

The preprocessing column also separates dimensionality reduction from the FastUMAP algorithm itself. All methods receive the same min-max normalized input, and PCA is applied only as a shared benchmark preprocessing step before any method-specific runtime is measured. Datasets with low original dimension but many samples therefore show a ``no-op'' PCA trigger: the preprocessing rule is activated by sample count, but the feature dimension is already below the PCA target.

\input{tables/table_per_dataset_m}

\subsection{Experimental defaults and compute environment}

All benchmark tables in Section~\ref{subsec:benchmark_results} are tied to the frozen rerun snapshots described above. Table~\ref{tab:hyperparams} lists the FastUMAP input budgets after the shared preprocessing step, including the landmark ratio $r=m/n$ used by each dataset. These ratios explain the main speed--quality operating points in the paper: the smaller datasets keep dense landmark coverage, whereas MNIST and Fashion-MNIST use a much smaller ratio after the landmark cap is reached.

The anonymous supplementary package contains the corresponding code, configs, lightweight sanity-check scripts, and frozen result files. Runtime measurements were collected on an Apple M4 Max workstation with 36\,GB RAM. The main benchmark tables report method runtime after shared preprocessing; Table~\ref{tab:runtime_breakdown} separately isolates FastUMAP stage costs under single-threaded Numba, while Table~\ref{tab:single_thread} provides a threading-matched comparison against BH-t-SNE.

\input{tables/table_hyperparams}

\clearpage
\subsection{Threading-matched runtime comparison}

Table~\ref{tab:single_thread} isolates the runtime comparison most sensitive to implementation threading: FastUMAP is run once with Numba restricted to a single thread and once with the workstation default, while BH-t-SNE remains single-threaded.

\input{tables/supp_table_single_thread}
\clearpage

\subsection{FastUMAP runtime breakdown}

Table~\ref{tab:runtime_breakdown} decomposes FastUMAP into graph construction, spectral initialization, and SGD refinement so that the total runtime claims can be traced to concrete pipeline stages.

\input{tables/table_runtime_breakdown}
\clearpage

\subsection{Biological application example}
\label{app:scrna}

We also include a biological case study to test whether the approximation remains useful outside the generic benchmark suite. On a wild-type mouse retina scRNA-seq dataset with 6,301 cells across 13 retinal cell types~\cite{heng2019,wolf2018scanpy}, FastUMAP reaches 89.0\% kNN accuracy, compared with 89.8\% for UMAP and 90.5\% for BH-t-SNE. These results indicate that the same approximation can preserve meaningful structure in a domain where UMAP and BH-t-SNE are commonly used.

\begin{figure}[H]
\centering
\includegraphics[width=\textwidth]{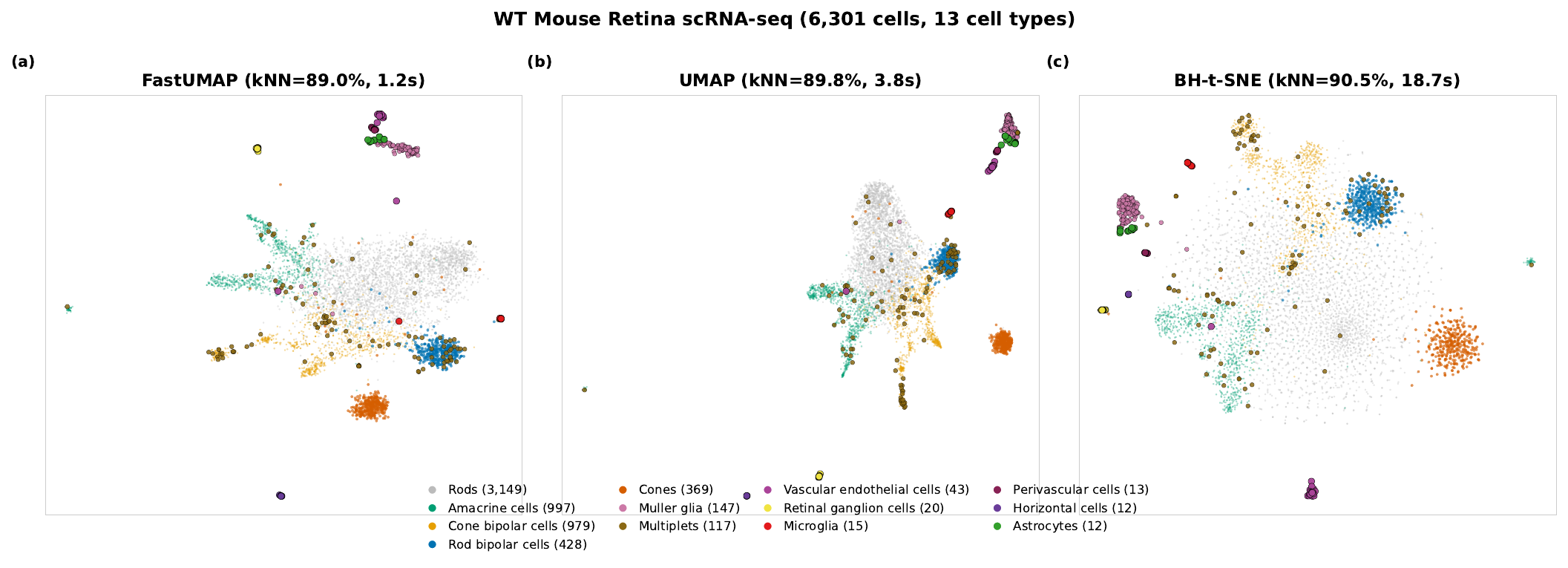}
\caption{\textbf{Single-cell RNA-seq embedding example.} FastUMAP preserves the major retinal cell populations while remaining close to UMAP and BH-t-SNE on downstream kNN accuracy.}
\label{fig:scrna}
\end{figure}

%% file: tables/table_per_dataset_m.tex
\begin{table}[t]
\centering
\caption{\textbf{Per-dataset experimental configuration.} $d_{\text{orig}}$ = original feature count; $d$ = input dimension after preprocessing; $m$ = landmark count used in experiments. Preprocessing: min-max normalization, then PCA to $\min(50, d_{\text{orig}})$ dimensions when $d_{\text{orig}} > 50$ or $n > 5{,}000$ (effective only when $d_{\text{orig}} > 50$). Landmark rule: $m = 0.5n$ for $n < 500$; $m = 0.7n$ for $500 \leq n < 5{,}000$; $m = \min(0.7n, 5{,}000)$ for $n \geq 5{,}000$.}
\label{tab:per_dataset_m}
\small
\begin{tabular}{@{}lrrrrc@{}}
\toprule
\textbf{Dataset} & \textbf{$n$} & \textbf{$d_{\text{orig}}$} & \textbf{$d$} & \textbf{$m$} & \textbf{PCA} \\
\midrule
Wine        &      178 &   13 &  13 &     89 & — \\
Dermatology &      366 &   34 &  34 &    183 & — \\
Breast      &      699 &    9 &   9 &    489 & — \\
Mfeat       &   2\,000 &  649 &  50 &  1\,400 & 649→50 \\
Spambase    &   4\,601 &   57 &  50 &  3\,220 & 57→50 \\
Dry Bean    &  13\,611 &   16 &  16 &  5\,000 & no-op \\
Shuttle     &  14\,500 &    9 &   9 &  5\,000 & no-op \\
MNIST       &  70\,000 &  784 &  50 &  5\,000 & 784→50 \\
F-MNIST     &  70\,000 &  784 &  50 &  5\,000 & 784→50 \\
\bottomrule
\end{tabular}

\vspace{2pt}
{\footnotesize ``no-op'' = PCA triggered by $n > 5{,}000$ but $d_{\text{orig}} \leq 50$, so output dimension equals input.
For Dry Bean, Shuttle, MNIST, and F-MNIST, the formula $0.7n$ would give $m > 5{,}000$ (e.g., $0.7 \times 14{,}500 = 10{,}150$ for Shuttle), but the experimental cap of 5{,}000 applies; the library default has no hard cap.}
\end{table}

%% file: tables/table_hyperparams.tex
\begin{table}[t]
\centering
\caption{\textbf{Per-dataset FastUMAP input budgets.} The benchmark script supplies the landmark count $m$ shown here together with \texttt{random\_state=42} on Euclidean inputs after the preprocessing in Table~\ref{tab:per_dataset_m}. The ratio $r = m/n$ is the primary user-facing speed--quality control. Other optimization details follow the submitted implementation defaults; the frozen rerun provenance is described in the appendix and anonymous supplementary CSV snapshots.}
\label{tab:hyperparams}
\small
\begin{tabular}{@{}lrrrrc@{}}
\toprule
\textbf{Dataset} & \textbf{$n$} & \textbf{$d$} & \textbf{$m$} & \textbf{$r$} & \textbf{Distance} \\
\midrule
Wine        &    178 &  13 &     89 & 0.50 & Euclidean \\
Dermatology &    366 &  34 &    183 & 0.50 & Euclidean \\
Breast      &    699 &   9 &    489 & 0.70 & Euclidean \\
Mfeat       &  2\,000 &  50 &  1\,400 & 0.70 & Euclidean \\
Spambase    &  4\,601 &  50 &  3\,220 & 0.70 & Euclidean \\
Dry Bean    & 13\,611 &  16 &  5\,000 & 0.37 & Euclidean \\
Shuttle     & 14\,500 &   9 &  5\,000 & 0.34 & Euclidean \\
MNIST       & 70\,000 &  50 &  5\,000 & 0.07 & Euclidean \\
F-MNIST     & 70\,000 &  50 &  5\,000 & 0.07 & Euclidean \\
\bottomrule
\end{tabular}

\vspace{2pt}
{\footnotesize $d$ is the feature dimensionality after PCA (where applied). MNIST and F-MNIST are PCA-reduced from 784 to 50 dimensions.}
\end{table}

%% file: tables/supp_table_single_thread.tex
\begin{table}[H]
\centering
\caption{\textbf{Single-threaded runtime comparison.} FastUMAP with \texttt{NUMBA\_NUM\_THREADS=1} (matched to BH-t-SNE's single-threaded execution) versus its default multi-core setting and BH-t-SNE. The single-threaded speedup column provides a fair threading-matched comparison.}
\label{tab:single_thread}
\small
\begin{tabular}{@{}lrrrrc@{}}
\toprule
\textbf{Dataset} & \textbf{$n$} & \textbf{FastUMAP (1T)} & \textbf{FastUMAP (14T)} & \textbf{BH-t-SNE (1T)} & \textbf{Speedup (1T)} \\
\midrule
Wine & 178 & 0.02 & 0.01 & 0.26 & 15$\times$ \\
Dermatology & 366 & 0.04 & 0.02 & 0.37 & 9$\times$ \\
Breast & 699 & 0.21 & 0.05 & 0.61 & 3$\times$ \\
Mfeat & 2\,000 & 0.61 & 0.13 & 2.23 & 4$\times$ \\
Spambase & 4\,601 & 1.06 & 0.25 & 38.43 & 36$\times$ \\
Dry Bean & 13\,611 & 4.09 & 0.84 & 102.85 & 25$\times$ \\
Shuttle & 14\,500 & 4.44 & 1.04 & 20.11 & 5$\times$ \\
MNIST & 70\,000 & 30.13 & 4.65 & 75.13 & 2$\times$ \\
F-MNIST & 70\,000 & 25.75 & 4.59 & 72.75 & 3$\times$ \\
\bottomrule
\end{tabular}
\vspace{2pt}
{\footnotesize 1T = single-threaded (\texttt{NUMBA\_NUM\_THREADS=1}); 14T = default (Apple M4 Max, 14-core CPU). BH-t-SNE uses Barnes--Hut tree with single thread. Single-threaded speedups remain 2--36$\times$ across all datasets, confirming FastUMAP's efficiency advantage is not solely attributable to multi-core parallelism.}
\end{table}

%% file: tables/table_runtime_breakdown.tex
\begin{table}[H]
\centering
\caption{\textbf{FastUMAP runtime breakdown} (seconds) by pipeline stage, median of 3 warm-cache runs. ``Graph'' = bipartite $k$-NN construction. ``Spectral'' = $M{\times}M$ eigendecomposition + Nystr\"{o}m extension. ``SGD'' = stochastic gradient descent with negative sampling.}
\label{tab:runtime_breakdown}
\small
\begin{tabular}{@{}lrr rrrr@{}}
\toprule
\textbf{Dataset} & \textbf{$n$} & \textbf{$m$} & \textbf{Graph} & \textbf{Spectral} & \textbf{SGD} & \textbf{Total} \\
\midrule
Wine        &     178 &     89 & $<$0.01 & $<$0.01 & 0.01 & 0.01 \\
Dermatology &     366 &    183 & $<$0.01 & $<$0.01 & 0.02 & 0.02 \\
Breast      &     699 &    489 & $<$0.01 & $<$0.01 & 0.04 & 0.05 \\
Mfeat       &  2\,000 &  1\,400 & 0.01 & 0.01 & 0.17 & 0.20 \\
Spambase    &  4\,601 &  3\,220 & 0.02 & 0.03 & 0.24 & 0.31 \\
Dry Bean    & 13\,611 &  5\,000 & 0.11 & 0.24 & 0.54 & 0.91 \\
Shuttle     & 14\,500 &  5\,000 & 0.10 & 0.29 & 0.56 & 0.98 \\
MNIST       & 70\,000 &  5\,000 & 0.90 & 0.63 & 9.83 & 11.56 \\
F-MNIST     & 70\,000 &  5\,000 & 0.87 & 0.43 & 9.36 & 10.89 \\
\bottomrule
\end{tabular}

\vspace{2pt}
{\footnotesize $m$ = landmark count (adaptive rule in Table~\ref{tab:per_dataset_m}). Measured on Apple M4 Max (36\,GB RAM). Input preprocessed identically to main experiments. Breakdown timings were measured with Numba single-threaded mode (\texttt{NUMBA\_NUM\_THREADS=1}) to isolate per-stage costs; the main runtime table (Table~\ref{tab:runtime}) uses default multi-core Numba, yielding lower totals (e.g., 4.65s for MNIST vs 11.56s here).}
\end{table}

%% file: checklist.tex
\section*{NeurIPS Paper Checklist}

\begin{enumerate}

\item {\bf Claims}
    \item[] Question: Do the main claims made in the abstract and introduction accurately reflect the paper's contributions and scope?
    \item[] Answer: \answerYes{}
    \item[] Justification: Yes. The abstract and Section~1 describe FastUMAP as a speed-oriented dimensionality reduction method with an explicit speed--quality trade-off, and Section~5 states the same scope and non-dominance limits.
    \item[] Guidelines:
    \begin{itemize}
        \item The answer \answerNA{} means that the abstract and introduction do not include the claims made in the paper.
        \item The abstract and/or introduction should clearly state the claims made, including the contributions made in the paper and important assumptions and limitations. A \answerNo{} or \answerNA{} answer to this question will not be perceived well by the reviewers. 
        \item The claims made should match theoretical and experimental results, and reflect how much the results can be expected to generalize to other settings. 
        \item It is fine to include aspirational goals as motivation as long as it is clear that these goals are not attained by the paper. 
    \end{itemize}

\item {\bf Limitations}
    \item[] Question: Does the paper discuss the limitations of the work performed by the authors?
    \item[] Answer: \answerYes{}
    \item[] Justification: Yes. Section~5 discusses the accuracy gap to BH-t-SNE, the effect of the landmark cap, the single-environment evaluation scope, the fixed-seed quality estimates, the practitioner-facing runtime framing, and the absence of a full thread-normalized audit for every baseline.
    \item[] Guidelines:
    \begin{itemize}
        \item The answer \answerNA{} means that the paper has no limitation while the answer \answerNo{} means that the paper has limitations, but those are not discussed in the paper. 
        \item The authors are encouraged to create a separate ``Limitations'' section in their paper.
        \item The paper should point out any strong assumptions and how robust the results are to violations of these assumptions (e.g., independence assumptions, noiseless settings, model well-specification, asymptotic approximations only holding locally). The authors should reflect on how these assumptions might be violated in practice and what the implications would be.
        \item The authors should reflect on the scope of the claims made, e.g., if the approach was only tested on a few datasets or with a few runs. In general, empirical results often depend on implicit assumptions, which should be articulated.
        \item The authors should reflect on the factors that influence the performance of the approach. For example, a facial recognition algorithm may perform poorly when image resolution is low or images are taken in low lighting. Or a speech-to-text system might not be used reliably to provide closed captions for online lectures because it fails to handle technical jargon.
        \item The authors should discuss the computational efficiency of the proposed algorithms and how they scale with dataset size.
        \item If applicable, the authors should discuss possible limitations of their approach to address problems of privacy and fairness.
        \item While the authors might fear that complete honesty about limitations might be used by reviewers as grounds for rejection, a worse outcome might be that reviewers discover limitations that aren't acknowledged in the paper. The authors should use their best judgment and recognize that individual actions in favor of transparency play an important role in developing norms that preserve the integrity of the community. Reviewers will be specifically instructed to not penalize honesty concerning limitations.
    \end{itemize}

\item {\bf Theory assumptions and proofs}
    \item[] Question: For each theoretical result, does the paper provide the full set of assumptions and a complete (and correct) proof?
    \item[] Answer: \answerYes{}
    \item[] Justification: Yes. Section~3 states the assumptions for Proposition~\ref{prop:fastumap_umap_equivalence}, and the appendix provides the full proof of that equivalence result.
    \item[] Guidelines:
    \begin{itemize}
        \item The answer \answerNA{} means that the paper does not include theoretical results. 
        \item All the theorems, formulas, and proofs in the paper should be numbered and cross-referenced.
        \item All assumptions should be clearly stated or referenced in the statement of any theorems.
        \item The proofs can either appear in the main paper or the supplemental material, but if they appear in the supplemental material, the authors are encouraged to provide a short proof sketch to provide intuition. 
        \item Inversely, any informal proof provided in the core of the paper should be complemented by formal proofs provided in appendix or supplemental material.
        \item Theorems and Lemmas that the proof relies upon should be properly referenced. 
    \end{itemize}

    \item {\bf Experimental result reproducibility}
    \item[] Question: Does the paper fully disclose all the information needed to reproduce the main experimental results of the paper to the extent that it affects the main claims and/or conclusions of the paper (regardless of whether the code and data are provided or not)?
    \item[] Answer: \answerYes{}
    \item[] Justification: Yes. Section~4 states the evaluation protocol, preprocessing, baselines, and seed convention, while the appendix and anonymous supplementary identify the frozen benchmark CSV snapshots and the accompanying implementation/configuration files that back the submission tables.
    \item[] Guidelines:
    \begin{itemize}
        \item The answer \answerNA{} means that the paper does not include experiments.
        \item If the paper includes experiments, a \answerNo{} answer to this question will not be perceived well by the reviewers: Making the paper reproducible is important, regardless of whether the code and data are provided or not.
        \item If the contribution is a dataset and\slash or model, the authors should describe the steps taken to make their results reproducible or verifiable. 
        \item Depending on the contribution, reproducibility can be accomplished in various ways. For example, if the contribution is a novel architecture, describing the architecture fully might suffice, or if the contribution is a specific model and empirical evaluation, it may be necessary to either make it possible for others to replicate the model with the same dataset, or provide access to the model. In general. releasing code and data is often one good way to accomplish this, but reproducibility can also be provided via detailed instructions for how to replicate the results, access to a hosted model (e.g., in the case of a large language model), releasing of a model checkpoint, or other means that are appropriate to the research performed.
        \item While NeurIPS does not require releasing code, the conference does require all submissions to provide some reasonable avenue for reproducibility, which may depend on the nature of the contribution. For example
        \begin{enumerate}
            \item If the contribution is primarily a new algorithm, the paper should make it clear how to reproduce that algorithm.
            \item If the contribution is primarily a new model architecture, the paper should describe the architecture clearly and fully.
            \item If the contribution is a new model (e.g., a large language model), then there should either be a way to access this model for reproducing the results or a way to reproduce the model (e.g., with an open-source dataset or instructions for how to construct the dataset).
            \item We recognize that reproducibility may be tricky in some cases, in which case authors are welcome to describe the particular way they provide for reproducibility. In the case of closed-source models, it may be that access to the model is limited in some way (e.g., to registered users), but it should be possible for other researchers to have some path to reproducing or verifying the results.
        \end{enumerate}
    \end{itemize}

\item {\bf Open access to data and code}
    \item[] Question: Does the paper provide open access to the data and code, with sufficient instructions to faithfully reproduce the main experimental results, as described in supplemental material?
    \item[] Answer: \answerYes{}
    \item[] Justification: Yes. The anonymous supplementary material includes code, configs, frozen benchmark CSV snapshots for the submission figures/tables, and a guide that distinguishes those frozen results from the lightweight sanity-check scripts; the datasets are public benchmarks identified in Section~4 and the configuration files.
    \item[] Guidelines:
    \begin{itemize}
        \item The answer \answerNA{} means that paper does not include experiments requiring code.
        \item Please see the NeurIPS code and data submission guidelines (\url{https://neurips.cc/public/guides/CodeSubmissionPolicy}) for more details.
        \item While we encourage the release of code and data, we understand that this might not be possible, so \answerNo{} is an acceptable answer. Papers cannot be rejected simply for not including code, unless this is central to the contribution (e.g., for a new open-source benchmark).
        \item The instructions should contain the exact command and environment needed to run to reproduce the results. See the NeurIPS code and data submission guidelines (\url{https://neurips.cc/public/guides/CodeSubmissionPolicy}) for more details.
        \item The authors should provide instructions on data access and preparation, including how to access the raw data, preprocessed data, intermediate data, and generated data, etc.
        \item The authors should provide scripts to reproduce all experimental results for the new proposed method and baselines. If only a subset of experiments are reproducible, they should state which ones are omitted from the script and why.
        \item At submission time, to preserve anonymity, the authors should release anonymized versions (if applicable).
        \item Providing as much information as possible in supplemental material (appended to the paper) is recommended, but including URLs to data and code is permitted.
    \end{itemize}

\item {\bf Experimental setting/details}
    \item[] Question: Does the paper specify all the training and test details (e.g., data splits, hyperparameters, how they were chosen, type of optimizer) necessary to understand the results?
    \item[] Answer: \answerYes{}
    \item[] Justification: Yes. Section~4 gives the core protocol, while the appendix and supplementary include per-dataset preprocessing, landmark budgets, frozen result snapshots, and the relevant configuration/runtime scripts.
    \item[] Guidelines:
    \begin{itemize}
        \item The answer \answerNA{} means that the paper does not include experiments.
        \item The experimental setting should be presented in the core of the paper to a level of detail that is necessary to appreciate the results and make sense of them.
        \item The full details can be provided either with the code, in appendix, or as supplemental material.
    \end{itemize}

\item {\bf Experiment statistical significance}
    \item[] Question: Does the paper report error bars suitably and correctly defined or other appropriate information about the statistical significance of the experiments?
    \item[] Answer: \answerNo{}
    \item[] Justification: No. The quality tables remain fixed-seed point estimates, and although the main runtime snapshots are medians over repeated warm-cache executions, we do not report error bars or significance tests; this limitation is stated in Section~4, Section~5, and the appendix.
    \item[] Guidelines:
    \begin{itemize}
        \item The answer \answerNA{} means that the paper does not include experiments.
        \item The authors should answer \answerYes{} if the results are accompanied by error bars, confidence intervals, or statistical significance tests, at least for the experiments that support the main claims of the paper.
        \item The factors of variability that the error bars are capturing should be clearly stated (for example, train/test split, initialization, random drawing of some parameter, or overall run with given experimental conditions).
        \item The method for calculating the error bars should be explained (closed form formula, call to a library function, bootstrap, etc.)
        \item The assumptions made should be given (e.g., Normally distributed errors).
        \item It should be clear whether the error bar is the standard deviation or the standard error of the mean.
        \item It is OK to report 1-sigma error bars, but one should state it. The authors should preferably report a 2-sigma error bar than state that they have a 96\% CI, if the hypothesis of Normality of errors is not verified.
        \item For asymmetric distributions, the authors should be careful not to show in tables or figures symmetric error bars that would yield results that are out of range (e.g., negative error rates).
        \item If error bars are reported in tables or plots, the authors should explain in the text how they were calculated and reference the corresponding figures or tables in the text.
    \end{itemize}

\item {\bf Experiments compute resources}
    \item[] Question: For each experiment, does the paper provide sufficient information on the computer resources (type of compute workers, memory, time of execution) needed to reproduce the experiments?
    \item[] Answer: \answerNo{}
    \item[] Justification: No. The appendix reports the workstation type, memory, and representative runtime tables, but the paper does not provide a full audit of total project compute across all exploratory runs.
    \item[] Guidelines:
    \begin{itemize}
        \item The answer \answerNA{} means that the paper does not include experiments.
        \item The paper should indicate the type of compute workers CPU or GPU, internal cluster, or cloud provider, including relevant memory and storage.
        \item The paper should provide the amount of compute required for each of the individual experimental runs as well as estimate the total compute. 
        \item The paper should disclose whether the full research project required more compute than the experiments reported in the paper (e.g., preliminary or failed experiments that didn't make it into the paper). 
    \end{itemize}
    
\item {\bf Code of ethics}
    \item[] Question: Does the research conducted in the paper conform, in every respect, with the NeurIPS Code of Ethics \url{https://neurips.cc/public/EthicsGuidelines}?
    \item[] Answer: \answerYes{}
    \item[] Justification: Yes. The work uses public benchmark datasets, releases only anonymous supplementary materials, and we are not aware of any aspect of the study that conflicts with the NeurIPS Code of Ethics.
    \item[] Guidelines:
    \begin{itemize}
        \item The answer \answerNA{} means that the authors have not reviewed the NeurIPS Code of Ethics.
        \item If the authors answer \answerNo, they should explain the special circumstances that require a deviation from the Code of Ethics.
        \item The authors should make sure to preserve anonymity (e.g., if there is a special consideration due to laws or regulations in their jurisdiction).
    \end{itemize}

\item {\bf Broader impacts}
    \item[] Question: Does the paper discuss both potential positive societal impacts and negative societal impacts of the work performed?
    \item[] Answer: \answerYes{}
    \item[] Justification: Yes. Section~5 notes the positive accessibility benefit of lower-latency exploratory analysis on commodity hardware and the negative risk that embeddings can be over-interpreted in scientific or biomedical workflows.
    \item[] Guidelines:
    \begin{itemize}
        \item The answer \answerNA{} means that there is no societal impact of the work performed.
        \item If the authors answer \answerNA{} or \answerNo, they should explain why their work has no societal impact or why the paper does not address societal impact.
        \item Examples of negative societal impacts include potential malicious or unintended uses (e.g., disinformation, generating fake profiles, surveillance), fairness considerations (e.g., deployment of technologies that could make decisions that unfairly impact specific groups), privacy considerations, and security considerations.
        \item The conference expects that many papers will be foundational research and not tied to particular applications, let alone deployments. However, if there is a direct path to any negative applications, the authors should point it out. For example, it is legitimate to point out that an improvement in the quality of generative models could be used to generate Deepfakes for disinformation. On the other hand, it is not needed to point out that a generic algorithm for optimizing neural networks could enable people to train models that generate Deepfakes faster.
        \item The authors should consider possible harms that could arise when the technology is being used as intended and functioning correctly, harms that could arise when the technology is being used as intended but gives incorrect results, and harms following from (intentional or unintentional) misuse of the technology.
        \item If there are negative societal impacts, the authors could also discuss possible mitigation strategies (e.g., gated release of models, providing defenses in addition to attacks, mechanisms for monitoring misuse, mechanisms to monitor how a system learns from feedback over time, improving the efficiency and accessibility of ML).
    \end{itemize}
    
\item {\bf Safeguards}
    \item[] Question: Does the paper describe safeguards that have been put in place for responsible release of data or models that have a high risk for misuse (e.g., pre-trained language models, image generators, or scraped datasets)?
    \item[] Answer: \answerNA{}
    \item[] Justification: Not applicable. The paper does not release high-risk generative models, scraped data, or other assets that would require special misuse safeguards.
    \item[] Guidelines:
    \begin{itemize}
        \item The answer \answerNA{} means that the paper poses no such risks.
        \item Released models that have a high risk for misuse or dual-use should be released with necessary safeguards to allow for controlled use of the model, for example by requiring that users adhere to usage guidelines or restrictions to access the model or implementing safety filters. 
        \item Datasets that have been scraped from the Internet could pose safety risks. The authors should describe how they avoided releasing unsafe images.
        \item We recognize that providing effective safeguards is challenging, and many papers do not require this, but we encourage authors to take this into account and make a best faith effort.
    \end{itemize}

\item {\bf Licenses for existing assets}
    \item[] Question: Are the creators or original owners of assets (e.g., code, data, models), used in the paper, properly credited and are the license and terms of use explicitly mentioned and properly respected?
    \item[] Answer: \answerNo{}
    \item[] Justification: No. The paper cites the original datasets and baseline methods, but it does not enumerate the licenses or terms of use for every external asset in the manuscript or supplementary package.
    \item[] Guidelines:
    \begin{itemize}
        \item The answer \answerNA{} means that the paper does not use existing assets.
        \item The authors should cite the original paper that produced the code package or dataset.
        \item The authors should state which version of the asset is used and, if possible, include a URL.
        \item The name of the license (e.g., CC-BY 4.0) should be included for each asset.
        \item For scraped data from a particular source (e.g., website), the copyright and terms of service of that source should be provided.
        \item If assets are released, the license, copyright information, and terms of use in the package should be provided. For popular datasets, \url{paperswithcode.com/datasets} has curated licenses for some datasets. Their licensing guide can help determine the license of a dataset.
        \item For existing datasets that are re-packaged, both the original license and the license of the derived asset (if it has changed) should be provided.
        \item If this information is not available online, the authors are encouraged to reach out to the asset's creators.
    \end{itemize}

\item {\bf New assets}
    \item[] Question: Are new assets introduced in the paper well documented and is the documentation provided alongside the assets?
    \item[] Answer: \answerYes{}
    \item[] Justification: Yes. The anonymous supplementary package documents the released FastUMAP code, configs, scripts, and lightweight result snapshots in \texttt{REPRODUCE\_ANON.md}.
    \item[] Guidelines:
    \begin{itemize}
        \item The answer \answerNA{} means that the paper does not release new assets.
        \item Researchers should communicate the details of the dataset\slash code\slash model as part of their submissions via structured templates. This includes details about training, license, limitations, etc. 
        \item The paper should discuss whether and how consent was obtained from people whose asset is used.
        \item At submission time, remember to anonymize your assets (if applicable). You can either create an anonymized URL or include an anonymized zip file.
    \end{itemize}

\item {\bf Crowdsourcing and research with human subjects}
    \item[] Question: For crowdsourcing experiments and research with human subjects, does the paper include the full text of instructions given to participants and screenshots, if applicable, as well as details about compensation (if any)? 
    \item[] Answer: \answerNA{}
    \item[] Justification: Not applicable. The paper does not involve crowdsourcing or experiments with human subjects.
    \item[] Guidelines:
    \begin{itemize}
        \item The answer \answerNA{} means that the paper does not involve crowdsourcing nor research with human subjects.
        \item Including this information in the supplemental material is fine, but if the main contribution of the paper involves human subjects, then as much detail as possible should be included in the main paper. 
        \item According to the NeurIPS Code of Ethics, workers involved in data collection, curation, or other labor should be paid at least the minimum wage in the country of the data collector. 
    \end{itemize}

\item {\bf Institutional review board (IRB) approvals or equivalent for research with human subjects}
    \item[] Question: Does the paper describe potential risks incurred by study participants, whether such risks were disclosed to the subjects, and whether Institutional Review Board (IRB) approvals (or an equivalent approval/review based on the requirements of your country or institution) were obtained?
    \item[] Answer: \answerNA{}
    \item[] Justification: Not applicable. The paper does not involve human-subject research requiring IRB or equivalent review.
    \item[] Guidelines:
    \begin{itemize}
        \item The answer \answerNA{} means that the paper does not involve crowdsourcing nor research with human subjects.
        \item Depending on the country in which research is conducted, IRB approval (or equivalent) may be required for any human subjects research. If you obtained IRB approval, you should clearly state this in the paper. 
        \item We recognize that the procedures for this may vary significantly between institutions and locations, and we expect authors to adhere to the NeurIPS Code of Ethics and the guidelines for their institution. 
        \item For initial submissions, do not include any information that would break anonymity (if applicable), such as the institution conducting the review.
    \end{itemize}

\item {\bf Declaration of LLM usage}
    \item[] Question: Does the paper describe the usage of LLMs if it is an important, original, or non-standard component of the core methods in this research? Note that if the LLM is used only for writing, editing, or formatting purposes and does \emph{not} impact the core methodology, scientific rigor, or originality of the research, declaration is not required.
    \item[] Answer: \answerNA{}
    \item[] Justification: Not applicable. LLMs are not an important, original, or non-standard component of the method or experiments.
    \item[] Guidelines:
    \begin{itemize}
        \item The answer \answerNA{} means that the core method development in this research does not involve LLMs as any important, original, or non-standard components.
        \item Please refer to our LLM policy in the NeurIPS handbook for what should or should not be described.
    \end{itemize}

\end{enumerate}